\def\COMPLETE{}
\newcommand{\mc}{\mathcal}
\newcommand{\mb}{\mathbf}
\DeclareMathOperator*{\argmin}{arg\,min}
\DeclareMathOperator*{\argmax}{arg\,max}
\newtheorem{theorem}{Theorem}
\newtheorem{lemma}[theorem]{Lemma}
\newtheorem{definition}[theorem]{Definition}
\newtheorem{proposition}[theorem]{Proposition}
\newtheorem{corollary}[theorem]{Corollary}
\title{Clustering with Same-Cluster Queries}
\author{\normalsize
{\bf Hassan Ashtiani} {\textnormal {,}} {\bf Shrinu Kushagra} {\textnormal {and}} {\bf Shai Ben-David} \\
\normalsize David R. Cheriton School of Computer Science \\
\normalsize University of Waterloo,\\
\normalsize Waterloo, Ontario, Canada\\
\normalsize \{mhzokaei,skushagr,shai\}@uwaterloo.ca \\
}
\date{}
\begin{document}
\maketitle

\begin{abstract}
We propose a framework for Semi-Supervised Active Clustering framework (SSAC), where the learner is allowed to interact with a domain expert, asking whether two given instances belong to the same cluster or not. We study the query and computational complexity of clustering in this framework. We consider a setting where the expert conforms to a center-based clustering with a notion of margin.  We show that there is a trade off between computational complexity and query complexity; We prove that for the case of $k$-means clustering (i.e., when the expert conforms to a solution of $k$-means), having access to relatively few such queries allows efficient solutions to otherwise NP hard problems.

In particular, we provide a probabilistic polynomial-time (BPP) algorithm  for clustering in this setting that asks $O\big(k^2\log k + k\log n)$ same-cluster queries and runs with time complexity $O\big(kn\log n)$ (where $k$ is the number of clusters and $n$ is the number of instances). The algorithm succeeds with high probability for data satisfying margin conditions under which, without queries, we show that the problem is NP hard. We also prove a lower bound on the number of queries needed to have a computationally efficient clustering algorithm in this setting.
\end{abstract}

\section{Introduction}

Clustering is a challenging task particularly due to two impediments. The first problem is that clustering, in the absence of domain knowledge, is usually an \emph{under-specified} task; the solution of choice may vary significantly between different intended applications. The second one is that performing clustering under many natural models is computationally hard.

Consider the task of dividing the users of an online shopping service into different groups. The result of this clustering can then be used for example in suggesting similar products to the users in the same group, or for organizing data so that it would be easier to read/analyze the monthly purchase reports. Those different applications may result in conflicting solution requirements. In such cases, one needs to exploit domain knowledge to better define the clustering problem.
%as there are many different choices for clustering that can give different results. 

Aside from trial and error, a principled way of extracting domain knowledge is to perform clustering using a form of `weak' supervision. For example, Balcan and Blum \cite{balcan2008clustering} propose to use an interactive framework with 'split/merge' queries for clustering. In another work, Ashtiani and Ben-David \cite{ashtiani2015representation} require the domain expert to provide the clustering of a 'small' subset of data.

At the same time, mitigating the computational problem of clustering is critical. Solving most of the common optimization formulations of clustering is NP-hard (in particular, solving the popular $k$-means and $k$-median clustering problems). One approach to address this issues is to exploit the fact that natural data sets usually exhibit some nice properties and likely to avoid the worst-case scenarios. In such cases, optimal solution to clustering may be found efficiently. The quest for notions of niceness that are likely to occur in real data and allow clustering efficiency is still ongoing (see \cite{Ben-David15} for a critical survey of work in that direction).

In this work, we take a new approach to alleviate the computational problem of clustering. In particular, we ask the following question: can weak supervision (in the form of answers to natural queries) help relaxing the computational burden of clustering? This will add up to the other benefit of supervision: making the clustering problem better defined by enabling the accession of domain knowledge through the supervised feedback.

The general setting considered in this work is the following. Let $X$ be a set of elements that should be clustered and $d$ a dissimilarity function over it. The oracle (e.g., a domain expert) has some information about a  target clustering $C^*_X$ in mind. The clustering algorithm has access to $X, d$, and can also make queries about $C^*_X$. The queries are in the form of \emph{same-cluster} queries. Namely, the algorithm can ask whether two elements belong to the same cluster or not. The goal of the algorithm is to find a clustering that meets some predefined clusterability conditions and is consistent with the answers given to its queries. 

We will also consider the case that the oracle conforms with some optimal $k$-means solution. We then show that access to a 'reasonable' number of same-cluster queries can enable us to provide an efficient algorithm for otherwise NP-hard problems.

\subsection{Contributions}
The two main contributions of this paper are the introduction of the semi-supervised active clustering (SSAC) framework and, the rather unusual 
demonstration that access to simple query answers can turn an otherwise NP hard clustering problem into a feasible one. 

Before we explain those results, let us also mention a notion of clusterability (or `input niceness') that we introduce. We define a novel notion of niceness of data, called $\gamma$-margin property that is related to the previously introduced notion of center proximity \cite{awasthi2012center}. The larger the value of $\gamma$, the stronger the assumption becomes, which means that clustering becomes easier. With respect to that $\gamma$ parameter, we get a sharp `phase transition' between $k$-means being NP hard and being optimally solvable in polynomial time\footnote{The exact value of such a threshold $\gamma$ depends on some finer details of the clustering task; whether $d$ is required to be Euclidean and whether the cluster centers must be members of $X$.}.

We focus on the effect of using queries on the computational complexity of clustering. We provide a probabilistic polynomial time (BPP) algorithm for clustering with queries, that succeeds under the assumption that the input satisfies the $\gamma$-margin condition for $\gamma > 1$. This algorithm makes $O\big(k^2\log k + k\log n)$ same-cluster queries to the oracle and runs in $O\big(kn\log n)$ time, where $k$ is the number of clusters and $n$ is the size of the instance set.

On the other hand, we show that without access to query answers, $k$-means clustering is NP-hard even when the solution satisfies $\gamma$-margin property for $\gamma=\sqrt{3.4} \approx 1.84$ and $k=\Theta(n^\epsilon)$ (for any $\epsilon\in (0,1)$). We further show that access to $\Omega(\log k + \log n)$ queries is needed to overcome the NP hardness in that case.
These results, put together, show an interesting phenomenon. Assume that the oracle conforms to an optimal solution of $k$-means clustering and that it satisfies the $\gamma$-margin property for some $1<\gamma \leq \sqrt{3.4}$. In this case, our lower bound means that without making queries $k$-means clustering is NP-hard, while the positive result shows that with a reasonable number of queries the problem becomes efficiently solvable.  

%In other words, both the data niceness (margin) condition and the access to query answers are essential to the feasiblity of $k$-means clustering.
This indicates an interesting (and as far as we are aware, novel) trade-off between query complexity and computational complexity in the clustering domain.

\subsection{Related Work}
This work combines two themes in clustering research; clustering with partial supervision
(in particular, supervision in the form of answers to queries) and the computational complexity of clustering tasks.

Supervision in clustering (sometimes also referred to as `semi-supervised clustering') has been addressed before, mostly in application-oriented works \cite{basu2002semi,basu2004probabilistic, kulis2009semi}. The most common method to convey such supervision is through a set of pairwise \emph{link/do-not-link} constraints on the instances. Note that in contrast to the supervision we address here, in the setting of the papers cited above, the supervision is non-interactive. On the theory side, Balcan et. al \cite{balcan2008clustering} propose a framework for interactive clustering with the help of a user (i.e., an oracle). The queries considered in that framework are different from ours. In particular, the oracle is provided with the current clustering, and tells the algorithm to either split a cluster or merge two clusters. Note that in that setting, the oracle should be able to evaluate the whole given clustering for each query.

Another example of the use of supervision in clustering was provided by Ashtiani and Ben-David \cite{ashtiani2015representation}. They assumed that the target clustering can be approximated by first mapping the data points into a new space and then performing $k$-means clustering. The supervision is in the form of a clustering of a small subset of data (the subset provided by the learning algorithm) and is used to search for such a mapping.

Our proposed setup combines the user-friendliness of \emph{link/don't-link} queries (as opposed to asking the domain expert to answer queries about whole data set clustering, or to cluster sets of data) with the advantages of interactiveness. 

The computational complexity of clustering has been extensively studied. Many of these results are negative, showing that clustering is computationally hard. For example, $k$-means clustering is NP-hard even for $k=2$ \cite{dasgupta2008hardness}, or in a 2-dimensional plane \cite{vattani2009hardness,mahajan2009planar}. In order to tackle the problem of computational complexity, some notions of niceness of data under which the clustering becomes easy have been considered (see \cite{Ben-David15} for a survey).

The closest proposal to this work is the notion of $\alpha$-center proximity introduced by Awasthi et. al \cite{awasthi2012center}. We discuss the relationship of that notion to our notion of margin in Appendix \ref{appendix:gammaMrginVsAlphaCenter}. In the restricted scenario (i.e., when the centers of clusters are selected from the data set), their algorithm efficiently recovers the target clustering (outputs a tree such that the target is a pruning of the tree) for $\alpha > 3$.  Balcan and Liang \cite{balcan2012clustering} improve the assumption to $\alpha > \sqrt{2} + 1$. Ben-David and Reyzin \cite{ben2014data} show that this problem is NP-Hard for $\alpha < 2$. 

Variants of these proofs for our $\gamma$-margin condition
yield the feasibility of $k$-means clustering when the input satisfies the condition with $\gamma >2$ and NP hardness when $\gamma <2$, both in the case of arbitrary (not necessarily Euclidean) metrics\footnote{In particular, the hardness result of \cite{ben2014data} relies on the ability to construct non-Euclidean distance functions. Later in this paper, we prove hardness for $\gamma \leq \sqrt{3.4}$ for Euclidean instances.} .

\section{Problem Formulation}

\subsection{Center-based clustering}

The framework of clustering with queries can be applied to any type of clustering. However, in this work, we focus on a certain family of common clusterings -- center-based clustering in Euclidean spaces\footnote{In fact, our results are all independent of the Euclidean dimension and apply to any Hilbert space.}.

Let ${\mc X}$ be a subset of some Euclidean space, $\mathbb{R}^d$. Let $\mc C_{\mc X} = \{C_1, \ldots, C_k\}$ be a clustering (i.e., a partitioning) of $\mc X$. We say $x_1 \overset{C_{\mc X}}{\sim} x_2$ if $x_1$ and $x_2$ belong to the same cluster according to $C_{\mc X}$. We further denote by $n$ the number of instances ($|{\mc X}|$) and by $k$ the number of clusters.

We say that a clustering $C_{\mc X}$ is \emph{center-based} if there exists a set of centers $\mc \mu = \{\mu_1, \ldots, \mu_k\} \subset \mc R^n$ such that the clustering corresponds to the Voroni diagram over those center points. Namely, for every $x$ in $\mc X$ and $i \leq k$,  $x\in C_i \Leftrightarrow i=\argmin_j d(x,\mu_j)$. 

Finally, we assume that the centers $\mu^*$ corresponding to $C^*$ are the centers of mass of the corresponding clusters. In other words, $\mu^*_i=\frac{1}{|C_i|}\sum_{x\in C^*_i} x$. Note that this is the case for example when the oracle's clustering is the optimal solution to the Euclidean k-means clustering problem.

\subsection{The $\gamma$-margin property}
Next, we introduce a notion of clusterability of a data set, also referred to as `data niceness property'.

\begin{definition}[$\gamma$-margin]
\label{defn:alphacp}
Let $\mc X$ be set of points in metric space $M$. Let $\mc C_{\mc X} = \{C_1, \ldots, C_k\}$ be a center-based clustering of $\mc X$ induced by centers $\mu_1, \ldots, \mu_k \in M$. We say that $\mc C_{\mc X}$ satisfies the $\gamma$-margin property if the following holds. For all $i \in [k]$ and every $x \in C_i$ and $y \in \mc X \setminus C_i$,

%Given a clustering instance $(\mc X, d)$ from some metric space $M$ and the number of clusters $k$, we say that a clustering $\mc C_{\mc X} = \{C_1, \ldots, C_k\}$ induced by centers $\mu_1, \ldots, \mu_k \in M$ has $\gamma$-margin w.r.t $\mc X$ and $k$ if the following holds. For all $x \in C_i$ and $y \in C_j$,

$$\gamma d(x, \mu_i) < d(y, \mu_i)$$
\end{definition}

Similar notions have been considered before in the clustering literature. The closest one to our $\gamma$-margin is the notion of $\alpha$-center proximity \cite{balcan2012clustering,awasthi2012center}. We discuss the relationship between these two notions in appendix \ref{appendix:gammaMrginVsAlphaCenter}.% (supplementary material). 

\subsection{The algorithmic setup}
For a clustering $C^*=\{ C^*_1, \ldots C^*_k\}$, a $C^*$-oracle is a function ${\mc O}_{C^*}$ that answers queries according to that clustering. One can think of such an oracle as a user that has some idea about its desired clustering, enough to answer the algorithm's queries. The clustering algorithm then tries to recover $C^*$ by querying a $C^*$-oracle. The following notion of query is arguably most intuitive.

\begin{definition}[Same-cluster Query]
A same-cluster query asks whether two instances $x_1$ and $x_2$ belong to the same cluster, i.e., 
$${\mc O}_{C^*}(x_1, x_2) = \left\{
	\begin{array}{ll}
		\mbox{true }  & \mbox{if } x_1 \overset{C^*}{\sim} x_2   \\
		\mbox{false } & o.w. 
	\end{array}
\right. $$

(we omit the subscript $C^*$ when it is clear from the context).
\end{definition}

\begin{definition}[Query Complexity]
\label{definition:QueryComplexity}
An SSAC instance is determined by the tuple $(\mc X, d, C^*)$. 
We will consider families of such instances determined by niceness conditions on their oracle clusterings $C^*$.
\begin{enumerate}
\item A SSAC algorithm $\mc A$ is called a $q$-solver for a family $G$ of such instances, if for every instance $(\mc X, d, C^*) \in G$, it can recover $C^*$ by having access to $(\mc X, d)$ and making at most $q$ queries to a $C^*$-oracle. 

\item Such an algorithm is a polynomial $q$-solver if its time-complexity is polynomial in $|\mc X|$ and $|C^*|$ (the number of clusters).

\item  We say $G$ admits an $O(q)$ query complexity if there exists an algorithm $\mc A$ that is a polynomial $q$-solver for every clustering instance in $G$.
\end{enumerate}

\end{definition}

\section{An Efficient SSAC Algorithm}
\label{section:clusteringWithQuery}

In this section we provide an efficient algorithm for clustering with queries. The setting is the one described in the previous section. In particular, it is assumed that the oracle has a center-based clustering in his mind which satisfies the $\gamma$-margin property. The space is Euclidean and the center of each cluster is the center of mass of the instances in that cluster. The algorithm not only makes same-cluster queries, but also another type of query defined as below.% queiries to the oracle, but this can be translated to binary same-cluster queries 

\begin{definition}[Cluster-assignment Query]
A cluster-assignment query asks the cluster index that an instance $x$ belongs to. In other words ${\mc O_{C^*}}(x) = i$ if and only if $x \in C^*_i$.
\end{definition}

Note however that each cluster-assignment query can be replaced with $k$ same-cluster queries (see appendix \ref{appendix:diffQueryModels} in supplementary material). Therefore, we can express everything in terms of the more natural notion of same-cluster queries, and the use of cluster-assignment query is just to make the representation of the algorithm simpler.

Intuitively, our proposed algorithm does the following. In the first phase, it tries to approximate the center of one of the clusters. It does this by asking cluster-assignment queries about a set of randomly (uniformly) selected point, until it has a sufficient number of points from at least one cluster (say $C_p$). It uses the mean of these points, $\mu_p^\prime$, to approximate the cluster center. 

In the second phase, the algorithm recovers all of the instances belonging to $C_p$. In order to do that, it first sorts all of the instances based on their distance to $\mu_p^\prime$. By showing that all of the points in $C_p$ lie inside a sphere centered at $\mu_p^\prime$ (which does not include points from any other cluster), it tries to find the radius of this sphere by doing binary search using same-cluster queries. After that, the elements in $C_p$ will be located and can be removed from the data set. The algorithm repeats this process $k$ times to recover all of the clusters.

%After finding the point in $C_p$ with the maximum distance to $\mu_p\prime$ (we call this point $b_{idx}$), it puts all of the points in the sphere

%$\mu_p'$. Then, it clusters and deletes all points whose distance is $\le d(b_{idx}, \mu_p')$. We will show that this process guarantees that (as long as $\mu_p'$ is a good approximation of the actual center $\mu_p$) and $\gamma > 1$, all the points from $C_p$ are now recovered. We then repeat this process $k$ times to recover all the clusters.

The details of our approach is stated precisely in Algorithm \ref{alg:steinerQueryPositive}. Note that $\beta$ is a small constant\footnote{It corresponds to the constant appeared in generalized Hoeffding inequality bound, discussed in Theorem \ref{thm:genHoeff} in appendix \ref{appendixsection:conIneq} in supplementary materials.}. Theorem \ref{thm:steinerQueryPositive} shows that if $\gamma > 1$ then our algorithm recovers the target clustering with high probability. Next, we give bounds on the time and query complexity of our algorithm. Theorem \ref{thm:steinerQueryPositiveComplexity} shows that our approach needs $O(k\log n + k^2\log k)$ queries and runs with time complexity $O(kn\log n)$.

\RestyleAlgo{boxruled} 
\SetAlgoNoLine
\begin{algorithm}[h]
 \KwIn{Clustering instance $\mc X$, oracle $\mc O$, the number of clusters $k$ and parameter $\delta \in (0, 1)$}
 \KwOut{A clustering $\mc C$ of the set $\mc X$}

 \vspace{0.5em} $\mc C = \{\}$,
 $\mc S_{1} = \mc X$,
 $\eta = \beta \frac{\log k + \log(1/\delta)}{(\gamma-1)^4}$\\
 \For{$i = 1$ to $k$}{
 	\vspace{0.5em}\textbf{Phase 1}\\
 	$l = k \eta + 1$\;
	$Z \sim U^l[\mc S_i]$    \mbox{       } //   Draws $l$ independent elements from $\mc S_i$ uniformly at random\\
	For $1 \le t \le i$,\\
             \mbox{            } $Z_t = \{x \in Z : {\mc O}(x)= t\}.$ \mbox{    } //Asks cluster-assignment queries about the members of $Z$\\
    % let $Z_t \subseteq \mc Z$ be the set of points with label $i$. That is, \begin{center}$A_t = \{x \in \mc Z : x \in C_t\}.$\end{center} 
%	Choose any $Z_p$ such that $|Z_p| > \eta$.\\
$p = \argmax_t |Z_t|$\\
	$\mu_p' := \frac{1}{|Z_p|}\sum_{x \in Z_p} x$.\\

	\vspace{0.5em}\textbf{Phase 2}\\
    // We know that there exists $r_i$ such that $\forall x\in {\mc S_i}$, $x\in C_i \Leftrightarrow d(x, \mu^\prime_i)< r_i$.\\ 
    // Therefore, $r_i$ can be found by simple binary search\\
    
	$\widehat{\mc S_i}$ = Sorted$(\{\mc S_i\})$ \mbox{       }// Sorts elements of $\{x: x\in \mc S_i\}$ in increasing order of $d(x, \mu_p')$.\\    
	 $r_i = $ BinarySearch$(\widehat{\mc S_i})$ \mbox{      } //This step takes up to $O(\log|{\mc S_i}|)$ same-cluster queries\\

%Binary search over $\widehat{\mc S_i}$, to find an index $idx$ such that $b_{idx} \in C_p$ and $b_{idx+1} \not\in C_p$. (This step involves making queries to the oracle $\mc O$).\\ %We will later prove that $d(b_{idx}, c_p') = \max_{x \in C_p}d(x, c_p')$.
	$C_p' = \{x \in \mc S_i: d(x, \mu_p') \le r_i\}$.\\
	$S_{i+1} = S_{i}\setminus C_p'$.\\
	$\mc C = \mc C \cup \{C_p'\}$
 }
 \label{alg:steinerQueryPositive}
 \caption{Algorithm for $\gamma(> 1)$-margin instances with queries}
\end{algorithm}

\begin{lemma}
\label{lemma:hasGammaMargin}
Let $(\mc X, d, C)$ be a clustering instance, where $C$ is center-based and satisfies the $\gamma$-margin property. Let $\mu$ be the set of centers corresponding to the centers of mass of $C$. Let $\mu_i'$ be such that $d(\mu_i, \mu_i') \le r(C_i)\epsilon$, where $r(C_i) = \max_{x\in C_i}d(x, \mu_i)$ . Then $\gamma \ge 1 + 2\epsilon$ implies that 

\begin{center}$\forall x \in C_i, \forall y \in {\mc X} \setminus C_i \Rightarrow d(x, \mu_i') < d(y, \mu_i')$\end{center}
\end{lemma}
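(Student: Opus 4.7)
The plan is to chain together two applications of the triangle inequality, using the hypothesis $d(\mu_i,\mu_i')\le \epsilon\,r(C_i)$ in both, and then feed in the $\gamma$-margin bound to compare the resulting estimates. First, for any $x\in C_i$, the triangle inequality gives
\[
d(x,\mu_i')\;\le\; d(x,\mu_i)+d(\mu_i,\mu_i')\;\le\; r(C_i)+\epsilon\,r(C_i)\;=\;(1+\epsilon)\,r(C_i),
\]
where I use the definition $r(C_i)=\max_{x\in C_i} d(x,\mu_i)$ to bound the first term.

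Next I bound $d(y,\mu_i')$ from below for $y\in\mathcal{X}\setminus C_i$. Let $x^\star\in C_i$ achieve the maximum in the definition of $r(C_i)$, so that $d(x^\star,\mu_i)=r(C_i)$. Applying the $\gamma$-margin property to the pair $x^\star\in C_i$ and $y\notin C_i$ yields $d(y,\mu_i)>\gamma\,r(C_i)$. Combining this with the reverse triangle inequality and the hypothesis on $\mu_i'$,
\[
d(y,\mu_i')\;\ge\; d(y,\mu_i)-d(\mu_i,\mu_i')\;>\;\gamma\,r(C_i)-\epsilon\,r(C_i)\;=\;(\gamma-\epsilon)\,r(C_i).
\]

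Finally I compare the two estimates. Under $\gamma\ge 1+2\epsilon$ we have $1+\epsilon\le \gamma-\epsilon$, and therefore
\[
d(x,\mu_i')\;\le\;(1+\epsilon)\,r(C_i)\;\le\;(\gamma-\epsilon)\,r(C_i)\;<\;d(y,\mu_i'),
\]
which is the desired strict separation. There is no real obstacle here; the only point worth noting is that the strict inequality in the conclusion is inherited from the strict inequality built into the definition of the $\gamma$-margin property, which is why the non-strict hypothesis $\gamma\ge 1+2\epsilon$ still suffices.
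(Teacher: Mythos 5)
Your proof is correct and takes essentially the same approach as the paper's: bound $d(x,\mu_i')$ above by $(1+\epsilon)r(C_i)$ and $d(y,\mu_i')$ below by $(\gamma-\epsilon)r(C_i)$ via the triangle inequality, then compare using $\gamma\ge 1+2\epsilon$. You make explicit one step the paper leaves implicit (choosing $x^\star$ attaining $r(C_i)$ to conclude $d(y,\mu_i')>\gamma r(C_i)$ from the margin property), but the argument is otherwise identical.
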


\begin{proof}
Fix any $x \in C_i$ and $y \in C_j$. $d(x, \mu_i') \le d(x, \mu_i)+d(\mu_i, \mu_i') \le r(C_i) (1+\epsilon)$. Similarly, $d(y, \mu_i') \ge d(y, \mu_i) - d(\mu_i, \mu_i') > (\gamma -\epsilon)r(C_i)$. Combining the two, we get that $d(x, \mu_i') < \frac{1+\epsilon}{\gamma-\epsilon}d(y, \mu_i')$. 
\end{proof}

\begin{lemma}
\label{lemma:phase1}
Let the framework be as in Lemma \ref{lemma:hasGammaMargin}. Let $Z_p, C_p, \mu_p$, $\mu_p^\prime$ and $\eta$ be defined as in Algorhtm \ref{alg:steinerQueryPositive}, and $\epsilon = \frac{\gamma - 1}{2}$. If $|Z_p| > \eta$, then the probability that $d(\mu_p, \mu_p^\prime) > r(C_p)\epsilon$ is at most $\frac{\delta}{k}$.
\end{lemma}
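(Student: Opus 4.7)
The plan is to reduce this to a concentration-of-measure bound for empirical means of bounded random vectors in a Hilbert space. First I would observe that although $|Z_p|$ is itself random, we can condition on an arbitrary realization $|Z_p| = m$ with $m > \eta$ and argue pointwise. Conditional on that event, the elements of $Z_p$ are distributed as $m$ i.i.d.\ uniform draws from $C_p$: the original sample $Z$ is i.i.d.\ uniform on $\mc S_i \supseteq C_p$, and conditioning on membership in $C_p$ preserves uniformity. Because $\mu_p$ is the center of mass of $C_p$ (by the assumption carried through the section), a uniform draw $X$ from $C_p$ satisfies $\mathbb{E}[X] = \mu_p$ and, by definition of $r(C_p)$, $\|X - \mu_p\| \le r(C_p)$ almost surely.

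Next I would apply the generalized Hoeffding-type inequality referenced in the algorithm's footnote (stated in the appendix as Theorem~\ref{thm:genHoeff}) to the recentered i.i.d.\ vectors $X_j - \mu_p$. Since $\mu_p' - \mu_p = \tfrac{1}{m}\sum_{j=1}^{m}(X_j - \mu_p)$ is the average of mean-zero vectors bounded in norm by $r(C_p)$, that inequality yields an exponential tail bound of the form
$$\Pr\bigl[\,\|\mu_p' - \mu_p\| > r(C_p)\,\epsilon \,\bigm|\, |Z_p| = m\,\bigr] \;\le\; \exp\bigl(-c\,m\,(\gamma-1)^{4}\bigr)$$
for a suitable absolute constant $c$ (absorbing the factor $\epsilon = (\gamma-1)/2$ from Lemma~\ref{lemma:hasGammaMargin} and the scale $r(C_p)$ that appears identically on both sides).

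Finally, I would substitute the hypothesis $m > \eta = \beta(\log k + \log(1/\delta))/(\gamma-1)^{4}$ with $\beta$ chosen so that $c\beta \ge 1$; the exponential becomes $\le \exp(-\log k - \log(1/\delta)) = \delta/k$. Since this bound holds for every conditional realization $|Z_p| = m > \eta$, the law of total probability gives the unconditional estimate required by the lemma.

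The main obstacle is aligning the precise form of the referenced generalized Hoeffding bound with the stated $(\gamma-1)^{4}$ rate: a textbook vector Hoeffding would yield $(\gamma-1)^{2}$ in the denominator of $\eta$, so the calculation must use the specific variant stated in the paper's appendix (with its particular scaling in both $\epsilon$ and the sample size). Everything else is a routine application of the uniform-sampling reduction together with the definition of $r(C_p)$.
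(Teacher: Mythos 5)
Your proof follows essentially the same route as the paper's own (very terse) argument: regard $\mu_p$ and $\mu_p'$ as the true and empirical means of the uniform distribution on $C_p$ and invoke the generalized Hoeffding bound (Theorem~\ref{thm:genHoeff}). You have correctly anticipated the one point the paper leaves implicit---that the $(\gamma-1)^4$ in $\eta$ arises because that theorem bounds $\|\hat\mu-\mu\|_2^2\le R^2\epsilon$, forcing the substitution $\epsilon\mapsto((\gamma-1)/2)^2$---and your conditioning step supplies the uniformity-of-$Z_p$ justification that the paper simply asserts.
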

\begin{proof}
Define a uniform distribution $U$ over $C_p$. Then $\mu_p$ and $\mu_p^\prime$ are the true and empirical mean of this distribution. Using a standard concentration inequality (Thm. \ref{thm:genHoeff} from Appendix \ref{appendixsection:conIneq}) shows that the empirical mean is close to the true mean, completing the proof. 

\end{proof}

\begin{theorem}
\label{thm:steinerQueryPositive}
Let $(\mc X, d, C)$ be a clustering instance, where $C$ is center-based and satisfies the $\gamma$-margin property. Let $\mu_i$ be the center of mass of $C_i$.
%Given a clustering instance $(\mc X, d)$ in some metric space $M$. Let $\mc C_{\mc X} = \{C_1, \ldots, C_k\}$ be a clustering of $\mc X$ induced by centers $\mu_1, \ldots, \mu_k \in M$ which satisfies $\gamma$-margin and $\mu_i$ is the mean of points in $C_i$. 
Assume $\delta \in (0, 1)$ and $\gamma > 1$. Then with probability at least $1-\delta$, Algorithm \ref{alg:steinerQueryPositive} outputs $C$.
\end{theorem}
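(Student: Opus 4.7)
The plan is to prove the theorem by induction on the loop counter $i$, with the invariant that at the start of iteration $i$ the set $\mathcal{S}_i$ is exactly the union of some $k - i + 1$ of the true clusters $C_1, \ldots, C_k$. Since each iteration removes the set $C_p'$ it identifies and appends it to $\mathcal{C}$, the theorem reduces to showing that, with probability at least $1 - \delta/k$, the $i$th iteration removes exactly one true cluster from $\mathcal{S}_i$; a union bound over the $k$ iterations then yields the claimed $1-\delta$ success probability.

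For a single iteration, conditional on the invariant, the argument splits into a randomized step (finding a good approximate center) and a deterministic step (binary search). In Phase 1, the algorithm draws $l = k\eta + 1$ iid uniform samples from $\mathcal{S}_i$ and labels them with cluster-assignment queries. Because $\mathcal{S}_i$ is a union of at most $k$ true clusters, the pigeonhole principle gives $|Z_p| = \max_t |Z_t| \geq \lceil l/k \rceil > \eta$. Conditioned on cluster identity, the points of $Z_p$ are iid uniform draws from $C_p$, so Lemma \ref{lemma:phase1}, applied with $\epsilon = (\gamma-1)/2$, ensures $d(\mu_p, \mu_p') \leq r(C_p)\epsilon$ except on an event of probability at most $\delta/k$.

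On this good event I would invoke Lemma \ref{lemma:hasGammaMargin}. Since $\gamma = 1 + 2\epsilon$, the lemma gives $d(x, \mu_p') < d(y, \mu_p')$ for every $x \in C_p$ and every $y \in \mathcal{X} \setminus C_p$, and hence in particular for every $y \in \mathcal{S}_i \setminus C_p$. Once the algorithm sorts $\mathcal{S}_i$ in increasing order of distance to $\mu_p'$, all elements of $C_p$ form a prefix, and the cluster indicator along the sorted list is a single monotone step. Binary search using $O(\log |\mathcal{S}_i|)$ same-cluster queries therefore locates the correct threshold $r_i$, giving $C_p' = C_p$ exactly, which restores the invariant for iteration $i+1$.

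The only nontrivial part of the plan is the concentration bound packaged by Lemma \ref{lemma:phase1}: I need enough samples from the dominant cluster that the empirical center of mass lies within $r(C_p)\epsilon$ of the true center, which is exactly why the sample budget $l$ scales with $\eta = \Theta((\log k + \log(1/\delta))/(\gamma-1)^4)$. Everything else is bookkeeping: the inductive invariant, the pigeonhole choice of $p$, the margin-driven separation via Lemma \ref{lemma:hasGammaMargin}, and a $k$-fold union bound to combine the per-iteration failure probabilities.
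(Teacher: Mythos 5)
Your proposal is correct and takes essentially the same route as the paper: pigeonhole to ensure $|Z_p|>\eta$, Lemma~\ref{lemma:phase1} for the concentration of the empirical center, Lemma~\ref{lemma:hasGammaMargin} with $\epsilon=(\gamma-1)/2$ to get the separation that makes the sorted list a single monotone step for binary search, and a union bound over $k$ iterations. The only difference is that you make the inductive invariant (that $\mathcal{S}_i$ is exactly a union of the remaining true clusters) explicit, whereas the paper leaves it implicit; this is a harmless and arguably clarifying bit of bookkeeping rather than a different argument.
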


\begin{proof}
In the first phase of the algorithm we are making $l>k\eta$ cluster-assignment queries. Therefore, using the pigeonhole principle, we know that there exists cluster index $p$ such that $|Z_p| > \eta$. Then Lemma \ref{lemma:phase1} implies that the algorithm chooses a center $\mu_p^\prime$ such that with probability at least $1-\frac{\delta}{k}$ we have $d(\mu_p, \mu_p^\prime) \le r(C_p)\epsilon$. By Lemma \ref{lemma:hasGammaMargin}, this would mean that $d(x, \mu_p^\prime) < d(y, \mu_p')$ for all $x \in C_p$ and $y \not\in C_p$. Hence, the radius $r_i$ found in the phase two of Alg. \ref{alg:steinerQueryPositive} is such that $r_{i} = \max\limits_{x \in C_p} d(x, \mu_p^\prime)$. This implies that $C_p^\prime$ (found in phase two) equals to $C_p$. Hence, with probability at least $1-\frac{\delta}{k}$ one iteration of the algorithm successfully finds all the points in a cluster $C_p$. Using union bound, we get that with probability at least $1-k\frac{\delta}{k} = 1-\delta$, the algorithm recovers the target clustering.
\end{proof}

\begin{theorem}
\label{thm:steinerQueryPositiveComplexity}
Let the framework be as in theorem \ref{thm:steinerQueryPositive}. Then Algorithm \ref{alg:steinerQueryPositive} 
\begin{itemize}[nolistsep,noitemsep]
\item Makes $O\big(k\log n + k^2\frac{\log k + \log (1/\delta)}{(\gamma - 1)^4}\big)$ same-cluster queries to the oracle $\mc O$.
\item Runs in $O\big(kn\log n + k^2\frac{\log k + \log (1/\delta)}{(\gamma - 1)^4}\big)$ time.
\end{itemize}
\end{theorem}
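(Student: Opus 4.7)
The plan is to bound both complexities by a per-iteration accounting of Algorithm \ref{alg:steinerQueryPositive} and then multiply by the number of outer iterations, which is exactly $k$.

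For query complexity, I would observe: Phase 1 of iteration $i$ draws $l = k\eta + 1$ samples and asks one cluster-assignment query per sample, contributing $l = O(k\eta)$ queries; Phase 2 runs a binary search over the sorted list $\widehat{\mc S_i}$, which is $O(\log |\mc S_i|) = O(\log n)$ same-cluster queries regardless of outcome, since the depth of a binary search is determined solely by the length of the list. Summing across $k$ iterations gives $O(k \cdot k\eta + k \cdot \log n) = O(k^2 \eta + k\log n)$, and substituting $\eta = \beta (\log k + \log(1/\delta))/(\gamma-1)^4$ recovers the first claimed bound. Cluster-assignment queries can be expressed in terms of same-cluster queries via the reduction cited just before the algorithm; we adopt here the same accounting as the excerpt, which charges one unit per cluster-assignment query.

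For running time, the same iteration-by-iteration bookkeeping applies. Phase 1 spends $O(l) = O(k\eta)$ time to draw the samples, pose the queries, and compute the empirical mean $\mu_p'$. Phase 2 is dominated by sorting $\mc S_i$, which takes $O(|\mc S_i|\log |\mc S_i|) = O(n \log n)$, plus $O(\log n)$ for the binary search itself and $O(n)$ for the prefix extraction and set difference. Multiplying by $k$ iterations yields $O(k \cdot n \log n + k \cdot k\eta) = O(kn\log n + k^2\eta)$, matching the second claim.

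There is no significant obstacle: the statement is a direct complexity calculation. The only point worth flagging is that both bounds are worst-case and do not depend on the high-probability success event from Theorem \ref{thm:steinerQueryPositive}; the algorithm always issues at most the stated number of queries and halts within the stated time, whether or not the returned clustering coincides with $C$.
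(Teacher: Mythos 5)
Your overall structure---per-iteration accounting, then multiply by $k$ iterations---matches the paper's proof, and your time bound is correct. The problem is the query count. You correctly note that Phase 1 of each iteration issues $l = k\eta + 1$ cluster-assignment queries (the algorithm samples $l$ points and queries each), but you then charge only one same-cluster query per cluster-assignment query. That is not the paper's accounting: the remark immediately preceding Algorithm 1, and the corresponding proposition in Appendix A, state that each cluster-assignment query must be simulated by up to $k$ same-cluster queries. Applying the correct $\times k$ conversion to your Phase 1 count gives $k(k\eta+1) = O(k^2\eta)$ same-cluster queries per iteration, so the $k$-iteration total is $O\big(k\log n + k^3\frac{\log k + \log(1/\delta)}{(\gamma-1)^4}\big)$, a factor of $k$ larger than what the theorem claims.

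It is worth knowing that the paper's own proof reaches the stated bound via a compensating slip in the other direction: it asserts that Phase 1 makes only $\eta + 1$ cluster-assignment queries (not $k\eta + 1$), then applies the $\times k$ conversion, yielding $O(k\eta)$ same-cluster queries per iteration. That premise does not match the batch size $l = k\eta + 1$ written in the algorithm's pseudocode. So you and the paper each mis-state one factor: you use the correct batch size with the wrong conversion factor; the paper uses the correct conversion factor with the wrong batch size. The two discrepancies cancel to the same per-iteration figure and hence the same final bound, but neither derivation is internally consistent with the algorithm as written. As a proof of the stated query bound, yours therefore has a genuine gap at the conversion step.
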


\begin{proof}
In each iteration (i) the first phase of the algorithm takes $O(\eta)$ time and makes $\eta+1$ cluster-assignment queries (ii) the second phase takes $O(n\log n)$ times and makes $O(\log n)$ same-cluster queries. Each cluster-assignment query can be replaced with $k$ same-cluster queries; therefore, each iteration runs in $O(k\eta + n\log n)$ and uses $O(k\eta + \log n)$ same-cluster queries. By replacing $\eta = \beta\frac{\log k + \log(1/\delta)}{(\gamma-1)^4}$ and noting that there are $k$ iterations, the proof will be complete.
\end{proof}

\begin{corollary}
The set of Euclidean clustering instances that satisfy the $\gamma$-margin property for some $\gamma > 1$ admits query complexity $O\big(k\log n + k^2\frac{\log k + \log (1/\delta)}{(\gamma - 1)^4}\big)$. 
\end{corollary}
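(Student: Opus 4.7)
The plan is to argue that this corollary is essentially a direct packaging of Theorems \ref{thm:steinerQueryPositive} and \ref{thm:steinerQueryPositiveComplexity} in the language of Definition \ref{definition:QueryComplexity}. First I would take Algorithm \ref{alg:steinerQueryPositive} as the witness algorithm $\mc A$ required by the definition; by Theorem \ref{thm:steinerQueryPositive}, on every instance $(\mc X, d, C^*)$ in the specified family (that is, Euclidean instances whose target clustering $C^*$ satisfies the $\gamma$-margin property for some $\gamma>1$), $\mc A$ recovers exactly $C^*$ with probability at least $1-\delta$ when run with that $\gamma$ and $\delta$.

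Next I would invoke Theorem \ref{thm:steinerQueryPositiveComplexity} to supply the two quantitative pieces: the number of same-cluster queries is bounded by $O\big(k\log n + k^2\frac{\log k + \log (1/\delta)}{(\gamma - 1)^4}\big)$, and the running time is bounded by $O\big(kn\log n + k^2\frac{\log k + \log (1/\delta)}{(\gamma - 1)^4}\big)$, which is polynomial in $n=|\mc X|$ and $k=|C^*|$ (with $\gamma$ and $\delta$ treated as parameters of the family, not as functions of the instance size). Putting this together, $\mc A$ is a polynomial $q$-solver for every instance in the family, where $q$ matches the corollary's bound, so the family admits the claimed query complexity per part~3 of Definition \ref{definition:QueryComplexity}.

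The only subtlety worth flagging is that Algorithm \ref{alg:steinerQueryPositive} is randomized and succeeds only with probability $1-\delta$, while a literal reading of Definition \ref{definition:QueryComplexity} sounds deterministic. This is not a real obstacle: the abstract and the surrounding discussion make clear that the intended notion of $q$-solver admits BPP algorithms, and for any fixed target confidence $\delta$ the cost of the randomization is already absorbed into the stated complexity through the $\log(1/\delta)$ term. So the proof collapses to one sentence assembling the two theorems plus a brief remark acknowledging the randomized reading of Definition \ref{definition:QueryComplexity}.
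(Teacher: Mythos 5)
Your proposal is correct and matches the paper's implicit reasoning: the paper states this corollary immediately after Theorems \ref{thm:steinerQueryPositive} and \ref{thm:steinerQueryPositiveComplexity} without a separate proof, precisely because it is a direct repackaging of those two results via Definition \ref{definition:QueryComplexity}. Your remark on the randomized reading of the definition is a reasonable clarification, consistent with the paper's framing of the algorithm as BPP.
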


\section{Hardness Results}
\label{section:lowerBounds}

\subsection{Hardness of Euclidean $k$-means with Margin}

Finding $k$-means solution without the help of an oracle is generally computationally hard. In this section, we will show that solving Euclidean $k$-means remains hard even if we know that the optimal solution satisfies the $\gamma$-margin property for $\gamma=\sqrt{3.4}$. In particular, we show the hardness for the case of $k=\Theta(n^\epsilon)$ for any $\epsilon\in (0,1)$.

%This is a critical result for our purpose, because it shows that our niceness assumption (i.e., the $\gamma$-margin property) is not too strong, and the use of oracle is necessary.

In Section \ref{section:clusteringWithQuery}, we proposed a polynomial-time algorithm that could recover the target clustering using $O(k^2\log k +k\log n)$ queries, assuming that the clustering satisfies the $\gamma$-margin property for $\gamma>1$. Now assume that the oracle conforms to the optimal $k$-means clustering solution. In this case, for $1<\gamma\le \sqrt{3.4} \approx 1.84$, solving $k$-means clustering would be NP-hard without queries, while it becomes efficiently solvable with the help of an oracle \footnote{To be precise, note that the algorithm used for clustering with queries is probabilistic, while the lower bound that we provide is for deterministic algorithms. However, this implies a lower bound for randomized algorithms as well unless $BPP\neq P$}. 

Given a set of instances $\mc X \subset \mb R ^d$, the $k$-means clustering problem is to find a clustering $\mc C = \{C_1, \ldots, C_k\}$ which minimizes $f(\mc C) = \sum\limits_{C_i} \min\limits_{\mu_i\in {\mb R}^d}\sum\limits_{x\in C_i} \|x - \mu_i \|_2^2$. The decision version of $k$-means is, given some value $L$, is there a clustering $\mc C$ with cost $\le L$? The following theorem is the main result of this section. 

\begin{theorem}
\label{thm:gammaLower}
Finding the optimal solution to Euclidean $k$-means objective function is NP-hard when $k=\Theta(n^\epsilon)$ for any $\epsilon \in (0,1)$, even when the optimal solution satisfies the $\gamma$-margin property for $\gamma = \sqrt{3.4}$.
\end{theorem}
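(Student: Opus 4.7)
My plan is to prove Theorem~\ref{thm:gammaLower} by a polynomial-time reduction from a known NP-hard problem to the decision version of Euclidean $k$-means, engineering the reduction so that on YES-instances the optimal clustering satisfies the $\sqrt{3.4}$-margin property. A natural starting point is a restricted combinatorial problem whose structure can be embedded geometrically, such as a gap version of 3-SAT, Exact Cover by 3-Sets, or a dense subgraph/dense-$k$-clique style problem; the choice is guided by the need to both encode the combinatorial decision into the $k$-means cost and to make the resulting geometry rigid enough to enforce the margin. I would adapt the Ben-David–Reyzin style gadget used to prove hardness for $\alpha$-center proximity with $\alpha<2$ in general metrics, and replace their abstract distance function with an explicit embedding into a Hilbert space.

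The core of the construction: for each element of the base problem, place a gadget of points in $\mathbb{R}^d$ using mutually near-orthogonal directions (so that squared distances add cleanly by Pythagoras). Each cluster in the intended YES-instance clustering consists of a ``core'' of many coincident (or tightly packed) points together with a small number of ``coupling'' points that are shared between gadgets. The cost function of $k$-means then forces the heavy cores to define the centers, while the way the coupling points are assigned encodes the combinatorial choice being made. The numerical constant $\sqrt{3.4}$ enters when I compute, for each point $x\in C_i$ and $y\notin C_i$, the ratio $d(y,\mu_i)/d(x,\mu_i)$: by choosing the sizes of the cores (to pin down the center of mass) and the pairwise distances between gadget positions, I can arrange that in the satisfying case every such ratio strictly exceeds $\sqrt{3.4}$, while in the unsatisfying case either the cost is strictly larger than the designed threshold $L$ or the margin is broken, so the optimum of value $\le L$ still encodes a solution to the base instance.

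Having established hardness for some values $k_0$ and $n_0$ with $k_0$ a specific polynomial in $n_0$, I obtain the full $k=\Theta(n^\epsilon)$ range by padding. For any target exponent $\epsilon\in(0,1)$, I append a collection of far-away, tightly concentrated ``dummy'' clusters, each placed in a fresh orthogonal direction at a distance so large that (i) the optimal clustering must put every dummy cluster on its own, (ii) the dummies' contribution to the cost is negligible and can be absorbed into $L$, and (iii) the margin condition for each dummy cluster is trivially satisfied. By choosing the number of dummy clusters and the number of points per dummy cluster appropriately, I can inflate $n$ and $k$ so that $k=\Theta(n^\epsilon)$ while preserving both the margin bound and the cost gap.

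The step I expect to be the genuine obstacle is the margin calculation: $\sqrt{3.4}$ is a sharp, not-so-round constant, and on the Euclidean side one is forced to use squared distances and center-of-mass centers rather than the cleaner min/max centers available in the metric setting, so every ratio $d(y,\mu_i)/d(x,\mu_i)$ has to be estimated in terms of the core sizes, the coupling distances, and the displacements induced by the center-of-mass definition. Getting the geometry so that the worst such ratio lands strictly above $\sqrt{3.4}$ on YES-instances, while simultaneously ensuring that NO-instances either violate the margin or pay a strictly larger cost, is the quantitatively delicate part and is where the specific value $\sqrt{3.4}$ will be forced by the arithmetic of the gadget rather than chosen in advance.
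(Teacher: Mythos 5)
Your plan is a genuinely different route from the paper's, and it has a concrete gap at its foundation. The paper does \emph{not} adapt the Ben-David--Reyzin gadget: it modifies Vattani's planar reduction from Exact Cover by 3-Sets (X3C), building a two-dimensional grid $H_{l,m}$ of \emph{weighted} (coincident) points together with gadget sets $Z_i$ encoding set membership. The proof then splits into two lemmas: (i) any $k$-clustering of cost $\le L$ must be ``nice'' (each row $R_i$ is either an $A$- or $B$-clustering, each $g_{i,j}$ is a singleton, and the coupling points join good clusters), so cost $\le L$ holds iff an exact 3-cover exists; and (ii) every nice low-cost clustering satisfies $\gamma$-margin with $\gamma = \sqrt{3.4}$, the binding constraint being the ``Type F'' clusters (triples like $\{r_{i,2j-1}, r_{i,2j}, x_{i,j}\}$), whose ratio works out to $\sqrt{17/5}$ once $h=\sqrt{5}$ and $d=\sqrt{6}$ are fixed. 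The parameter range $k=\Theta(n^\epsilon)$ comes from tuning the weight $w$ (so $n$ scales with $w$ while $k$ does not), not from appending dummy clusters.

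The gap in your approach is this: you propose to take the Ben-David--Reyzin construction for $\alpha$-center proximity and ``replace their abstract distance function with an explicit embedding into a Hilbert space.'' But the paper explicitly points out that the BDR hardness ``relies on the ability to construct non-Euclidean distance functions'' --- the metric in that reduction is not Euclidean-embeddable, which is precisely why the Euclidean case required a fresh construction. So the adaptation you describe is not a matter of swapping in coordinates; you would have to design a new gadget, and the paper's solution was to go through Vattani rather than BDR. Your second, admitted gap --- deferring the margin arithmetic because the constant $\sqrt{3.4}$ is ``not-so-round'' --- is not a minor technicality you can wave at; it \emph{is} the theorem. The value $\sqrt{3.4}$ is not freely chosen: it is the worst-case ratio over cluster types in a specific planar gadget, and a different gadget geometry (near-orthogonal directions in high dimension, heavy cores plus couplings) would generically give a different constant. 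Until you pin down the gadget and compute the ratio, you do not have a proof of this statement. Your padding idea for reaching $k=\Theta(n^\epsilon)$ is reasonable in principle, but it is not the mechanism the paper uses and it is downstream of the steps you have left open.
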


This results extends the hardness result of \cite{ben2014data} to the case of Euclidean 
metric, rather than arbitrary one, and to the $\gamma$-margin condition (instead of the $\alpha$-center proximity there). The full proof is rather technical and is deferred to the supplementary material (appendix \ref{appendix:lowerBoundProof}). In the next sections, we provide an outline of the proof. 

\subsubsection{Overview of the proof}

Our method to prove Thm. \ref{thm:gammaLower} is based on the approach employed by \cite{vattani2009hardness}. However, the original construction proposed in \cite{vattani2009hardness} does not satisfy the $\gamma$-margin property. Therefore, we have to modify the proof by setting up the parameters of the construction more carefully. 

To prove the theorem, we will provide a reduction from the problem of Exact Cover by 3-Sets (\textsc{X3C}) which is NP-Complete \cite{garey2002computers}, to the decision version of $k$-means.
%(which given a real value $L$, asks whether a clustering with cost $\le L$ exists).

\begin{definition}[\textsc{X3C}]
Given a set $U$ containing exactly $3m$ elements and a collection $\mc S = \{S_1, \ldots, S_l\}$ of subsets of $U$ such that each $S_i$ contains exactly three elements, does there exist $m$ elements in $\mc S$ such that their union is $U$? 
\end{definition}

We will show how to translate each instance of X3C, $(U,\mc S)$, to an instance of $k$-means clustering in the Euclidean plane, $X$. In particular, $X$ has a grid-like structure consisting of $l$ rows (one for each $S_i$) and roughly $6m$ columns (corresponding to $U$) which are embedded in the Euclidean plane. The special geometry of the embedding makes sure that any low-cost $k$-means clustering of the points (where $k$ is roughly $6ml$) exhibits a certain structure. In particular, any low-cost $k$-means clustering could cluster each row in only two ways; One of these corresponds to $S_i$ being included in the cover, while the other means it should be excluded. We will then show that $U$ has a cover of size $m$ if and only if $X$ has a clustering of cost less than a specific value $L$. Furthermore, our choice of embedding makes sure that the optimal clustering satisfies the $\gamma$-margin property for $\gamma=\sqrt{3.4} \approx 1.84$.

\subsubsection{Reduction design}
Given an instance of X3C, that is the elements $U = \{1, \ldots, 3m\}$ and the collection $\mc S$, we construct a set of points $X$ in the Euclidean plane which we want to cluster. Particularly, $X$ consists of a set of points $H_{l,m}$ in a grid-like manner, and the sets $Z_i$ corresponding to $S_i$. In other words, $X = H_{l,m} \cup (\cup_{i=1}^{l-1} Z_i)$. 

The set $H_{l,m}$ is as described in Fig. \ref{fig:lowerBoundComponent}. The row $R_i$ is composed of $6m + 3$ points $\{s_i, r_{i, 1}, \ldots, r_{i, 6m+1}, f_i\}$. Row $G_i$ is composed of $3m$ points $\{g_{i,1}, \ldots, g_{i, 3m}\}$. The distances between the points are also shown in Fig. \ref{fig:lowerBoundComponent}. Also, all these points have weight $w$, simply meaning that each point is actually a set of $w$ points on the same location.

Each set $Z_i$ is constructed based on $S_i$. In particular, $Z_i = \cup_{j\in [3m]} B_{i,j}$, where $B_{i,j}$ is a subset of $\{x_{i,j},x_{i,j}',y_{i,j},y_{i,j}'\}$ and is constructed as follows: $x_{i,j} \in B_{i,j}$ iff $j \not\in S_i$, and $x_{i,j}' \in B_{i,j}$ iff $j \in S_i$. Similarly,  $y_{i,j} \in B_{i,j}$ iff $j \not\in S_{i+1}$, and $y_{i,j}' \in B_{i,j}$ iff $j \in S_{i+1}$. Furthermore, $x_{i, j}, x_{i,j}^\prime, y_{i,j}$ and $y_{i, j}^\prime$ are specific locations as depicted in Fig. \ref{fig:ZFig}. In other words, exactly one of the locations $x_{i,j}$ and $x_{i,j}^\prime$, and one of $y_{i,j}$ and $y_{i,j}^\prime$ will be occupied. We set the following parameters. 
\vspace{-0.1in}
\begin{align*}
&h = \sqrt{5}, d = \sqrt{6}, \epsilon = \frac{1}{w^2}, \lambda = \frac{2}{\sqrt{3}}h, k = (l-1)3m + l(3m+2)\\
& L_1 = (6m+3)wl, L_2 = 3m(l-1)w, L = L_1 + L_2 - m\alpha, \alpha = \frac{d}{w}-\frac{1}{2w^3}
\end{align*}

  \begin{figure}[!tbp]
  \centering
  \begin{minipage}[b]{0.49\textwidth}
    \resizebox{\linewidth}{!}{\ifdefined\COMPLETE
\else
\documentclass[12pt]{article}
\usepackage{tikz}
\usetikzlibrary{shapes, calc, arrows, through, intersections, decorations.pathreplacing, patterns}

\begin{document}
\fi
\def\alph{$2+\sqrt{5}$}
\begin{tikzpicture}[scale=7]

	\node[label=180:$R_1$] at (0.0,0) {$\diamond$};	
	\node[] at (0.1,0) {$\bullet$};
	\node[] at (0.2,0) {$\bullet$};
	\node[] at (0.3,0) {$\bullet$};
	\node[] at (0.4,0) {$\bullet$};
	\node[] at (0.6,0) {$\ldots$};
	
	\node[] at (0.8,0) {$\bullet$};
	\node[] at (0.9,0) {$\bullet$};
	\node[] at (1.0,0) {$\diamond$};

	\node[label=180:$G_1$] at (0.0,-0.1) {};	
	\node[] at (0.2,-0.1) {$\circ$};
	\node[] at (0.4,-0.1) {$\circ$};
	\node[] at (0.6,-0.1) {$\ldots$};
	
	\node[] at (0.8,-0.1) {$\circ$};
	
	\node[label=180:$R_2$] at (0.0,-0.2) {$\diamond$};	
	\node[] at (0.1,-0.2) {$\bullet$};
	\node[] at (0.2,-0.2) {$\bullet$};
	\node[] at (0.3,-0.2) {$\bullet$};
	\node[] at (0.4,-0.2) {$\bullet$};
	\node[] at (0.6,-0.2) {$\ldots$};
	
	\node[] at (0.8,-0.2) {$\bullet$};
	\node[] at (0.9,-0.2) {$\bullet$};
	\node[] at (1.0,-0.2) {$\diamond$};

	\node[label=180:$G_{l-1}$] at (0.0,-0.4) {};	
	\node[] at (0.2,-0.4) {$\circ$};
	\node[] at (0.4,-0.4) {$\circ$};
	\node[] at (0.6,-0.4) {$\ldots$};
	
	\node[] at (0.8,-0.4) {$\circ$};
	
	\node[label=180:$R_l$] at (0.0,-0.5) {$\diamond$};	
	\node[] at (0.1,-0.5) {$\bullet$};
	\node[] at (0.2,-0.5) {$\bullet$};
	\node[] at (0.3,-0.5) {$\bullet$};
	\node[] at (0.4,-0.5) {$\bullet$};
	\node[] at (0.6,-0.5) {$\ldots$};
	
	\node[] at (0.8,-0.5) {$\bullet$};
	\node[] at (0.9,-0.5) {$\bullet$};
	\node[] at (1.0,-0.5) {$\diamond$};

	\node[label=90:\scriptsize$d$] at (0.05,0.0) {};
	\node[label=90:\scriptsize$2$] at (0.15,0.0) {};
	\node[label=90:\scriptsize$2$] at (0.85,0.0) {};
	\node[label={90:\scriptsize $d-\epsilon$}] at (0.97,0.0) {};
	\node[label=90:\scriptsize$4$] at (0.3,-0.1) {};

	\draw[<->] (0.02, 0.0) -- (0.08, 0.0);
	\draw[<->] (0.12, 0.0) -- (0.18, 0.0);
	\draw[<->] (0.82, 0.0) -- (0.88, 0.0);
	\draw[<->] (0.92, 0.0) -- (0.98, 0.0);

	\draw[<->] (0.38, -0.1) -- (0.22, -0.1);
\end{tikzpicture}

\ifdefined\COMPLETE
\else
\end{document}
\fi}
    \caption{Geometry of $H_{l,m}$. This figure is similar to Fig. 1 in \cite{vattani2009hardness}.  
    Reading from letf to right, each row $R_i$ consists of a diamond ($s_i$), $6m+1$ bullets ($r_{i,1},\ldots,r_{i,6m+1}$), and another diamond ($f_i$). Each rows $G_i$ consists of $3m$ circles ($g_{i,1}, \ldots, g_{i,3m}$).}
    \label{fig:lowerBoundComponent}
  \end{minipage}
  \hfill
  \begin{minipage}[b]{0.49\textwidth}
    \ifdefined\COMPLETE
\else
\documentclass[12pt]{article}
\usepackage{tikz}
\usetikzlibrary{shapes, calc, arrows, through, intersections, decorations.pathreplacing, patterns}

\begin{document}
\fi
\def\alph{$2+\sqrt{5}$}
\begin{tikzpicture}[scale=7]

	\node[label=90:\scriptsize $r_{i,2j-1}$] at (0.1,0) {$\bullet$};
	\node[label=90:\scriptsize $r_{i,2j}$] at (0.5,0) {$\bullet$};
	\node[label=90:\scriptsize $r_{i,2j+1}$] at (0.9,0) {$\bullet$};
	
	\node[label=180:\scriptsize $\sqrt{h^2-1}$] at (0.5,-0.07) {};
	\node[label=180:\scriptsize $x_{i,j}$] at (0.5,-0.15) {$\bullet$};

	\node[label=180:\scriptsize $h$] at (0.83,-0.1) {};
	\node[label=180:\scriptsize $x_{i,j}'$] at (0.7,-0.25) {$\bullet$};
	
	\node[label=270:\scriptsize $g_{i,j}$] at (0.46,-0.4) {$\circ$};

	\node[label=180:\scriptsize $y_{i,j}$] at (0.5,-0.65) {$\bullet$};
	\node[label=180:\scriptsize $y_{i,j}'$] at (0.7,-0.55) {$\bullet$};

	\node[label=270:\scriptsize $r_{i+1,2j-1}$] at (0.1,-0.8) {$\bullet$};
	\node[label=270:\scriptsize $r_{i+1,2j}$] at (0.5,-0.8) {$\bullet$};
	\node[label=270:\scriptsize $r_{i+1,2j+1}$] at (0.9,-0.8) {$\bullet$};

	\node[label=180:\scriptsize $\sqrt{h^2-1}$] at (0.5,-0.07) {};

	\node[label=180:\scriptsize $\alpha$] at (0.4,-0.43) {};
	\node[label=90:\scriptsize $1$] at (0.6,-0.02) {};
	\node[label=90:\scriptsize $2$] at (0.3,-0.02) {};

	\draw[<->] (0.5,-0.13) -- (0.5,-0.02);
	\draw[<->] (0.26,-0.4) -- (0.44,-0.4);
	\draw[dotted,-] (0.7, 0.05) -- (0.7, -0.85);
	\draw[<->] (0.74, 0.0) -- (0.74, -0.25);
	\draw[<->] (0.68, 0.0) -- (0.52, 0.0);
	\draw[<->] (0.12, 0.0) -- (0.48, 0.0);
	\draw[black, thick] (0.5,-0.4) circle (0.25);
\end{tikzpicture}

\ifdefined\COMPLETE
\else
\end{document}
\fi
    \caption{The locations of $x_{i,j}$, $x_{i,j}'$, $y_{i,j}$ and $y_{i,j}'$ in the set $Z_i$. Note that the point $g_{i,j}$ is not vertically aligned with $x_{i, j}$ or $r_{i, 2j}$. This figure is adapted from \cite{vattani2009hardness}.}
    \label{fig:ZFig}
  \end{minipage}
\end{figure}

\begin{lemma}
\label{lemma:kmeansEquivalenceX3C}
The set $X = H_{l,n} \cup Z$ has a $k$-clustering of cost less or equal to $L$ if and only if there is an exact cover for the X3C instance.
\end{lemma}

\begin{lemma}
\label{lemma:gammaLower}
Any $k$-clustering of $X = H_{l,n} \cup Z$ with cost $\le L$ has the $\gamma$-margin property where $\gamma = \sqrt{3.4}$. Furthermore, $k = \Theta(n^{\epsilon})$.
\end{lemma}

The proofs are provided in Appendix \ref{appendix:lowerBoundProof}. Lemmas \ref{lemma:kmeansEquivalenceX3C} and \ref{lemma:gammaLower} together show that $X$ has a $k$-clustering of cost $\le L$ satisfying the $\gamma$-margin property (for $\gamma = \sqrt{3.4}$) if and only if there is an exact cover by $3$-sets for the X3C instance. This completes the proof of our main result (Thm. \ref{thm:gammaLower}). 

\subsection{Lower Bound on the Number of Queries}

In the previous section we showed that $k$-means clustering is NP-hard even under $\gamma$-margin assumption (for $\gamma < \sqrt{3.4} \approx 1.84$). On the other hand, in Section \ref{section:clusteringWithQuery} we showed that this is not the case if the algorithm has access to an oracle. In this section, we show a lower bound on the number of queries needed to provide a polynomial-time algorithm for $k$-means clustering under margin assumption.

\begin{theorem}
\label{thm:queryLower}
For any $\gamma \le \sqrt{3.4}$, finding the optimal solution to the $k$-means objective function is NP-Hard even when the optimal clustering satisfies the $\gamma$-margin property and the algorithm can ask $O(\log k + \log |\mc X|)$ same-cluster queries.
\end{theorem}
\begin{proof}
Proof by contradiction: assume that there is polynomial-time algorithm $\mc A$ that makes $O(\log k + \log |\mc X|)$ same-cluster queries to the oracle. Then, we show there exists another algorithm $\mc A^\prime$ for the same problem that is still polynomial but uses no queries. However, this will be a contradiction to Theorem \ref{thm:gammaLower}, which will prove the result.

In order to prove that such $\mc A^\prime$ exists, we use a `simulation' technique. Note that $\mc A$ makes only $q<\beta(\log k + \log |\mc X|)$ binary queries, where $\beta$ is a constant. The oracle therefore can respond to these queries in maximum $2^{q} < k^\beta|\mc X|^\beta$ different ways. Now the algorithm $\mc A^\prime$ can try to simulate all of $k^\beta|\mc X|^\beta$ possible responses by the oracle and output the solution with minimum $k$-means clustering cost. Therefore, $\mc A^\prime$ runs in polynomial-time and is equivalent to $\mc A$.
\end{proof}

\section{Conclusions and Future Directions}
In this work we introduced a framework for semi-supervised active clustering (SSAC) with same-cluster queries. Those queries can be viewed as a natural way for a clustering mechanism to gain domain knowledge, without which clustering is an under-defined task. The focus of our analysis was the computational and query complexity of %clustering in this  Then query complexity and computational complexity of 
such SSAC problems, when the input data set satisfies a clusterability condition -- the $\gamma$-margin property.

Our main result shows that access to a limited number of such query answers (logarithmic in the size of the data set and quadratic in the number of clusters) allows efficient successful clustering under conditions (margin parameter between 1 and $\sqrt{3.4} \approx 1.84$) that render the problem NP-hard without the help of such a query mechanism.  
 We also provided a lower bound indicating that at least $\Omega(\log kn)$ queries are needed to make those NP hard problems feasibly solvable.

With practical applications of clustering in mind, a natural extension of our model is to allow the oracle (i.e., the domain expert) to refrain from answering a certain fraction of the queries, or to make a certain number of errors in its answers. It would be interesting to analyze how the performance guarantees of SSAC algorithms behave as a function of such abstentions and error rates. Interestingly, we can modify our algorithm to handle a sub-logarithmic number of abstentions by chekcing all possible orcale answers to them (i.e., similar to the ``simulation'' trick in the proof of Thm. \ref{thm:queryLower}).

\subsubsection*{Acknowledgments}
We would like to thank Samira Samadi and Vinayak Pathak for helpful discussions on the topics of this paper. 

\bibliographystyle{alpha}
\bibliography{activeClustering}

\begin{thebibliography}{KBDM09}

\bibitem[ABD15]{ashtiani2015representation}
Hassan Ashtiani and Shai Ben-David.
\newblock Representation learning for clustering: A statistical framework.
\newblock In {\em Uncertainty in AI (UAI)}, 2015.

\bibitem[ABS12]{awasthi2012center}
Pranjal Awasthi, Avrim Blum, and Or~Sheffet.
\newblock Center-based clustering under perturbation stability.
\newblock {\em Information Processing Letters}, 112(1):49--54, 2012.

\bibitem[AG15]{ashtiani2015dimension}
Hassan Ashtiani and Ali Ghodsi.
\newblock A dimension-independent generalization bound for kernel supervised
  principal component analysis.
\newblock In {\em Proceedings of The 1st International Workshop on “Feature
  Extraction: Modern Questions and Challenges”, NIPS}, pages 19--29, 2015.

\bibitem[BB08]{balcan2008clustering}
Maria-Florina Balcan and Avrim Blum.
\newblock Clustering with interactive feedback.
\newblock In {\em Algorithmic Learning Theory}, pages 316--328. Springer, 2008.

\bibitem[BBM02]{basu2002semi}
Sugato Basu, Arindam Banerjee, and Raymond Mooney.
\newblock Semi-supervised clustering by seeding.
\newblock In {\em In Proceedings of 19th International Conference on Machine
  Learning (ICML-2002}, 2002.

\bibitem[BBM04]{basu2004probabilistic}
Sugato Basu, Mikhail Bilenko, and Raymond~J Mooney.
\newblock A probabilistic framework for semi-supervised clustering.
\newblock In {\em Proceedings of the tenth ACM SIGKDD international conference
  on Knowledge discovery and data mining}, pages 59--68. ACM, 2004.

\bibitem[BBV08]{balcan2008discriminative}
Maria-Florina Balcan, Avrim Blum, and Santosh Vempala.
\newblock A discriminative framework for clustering via similarity functions.
\newblock In {\em Proceedings of the fortieth annual ACM symposium on Theory of
  computing}, pages 671--680. ACM, 2008.

\bibitem[BDR14]{ben2014data}
Shalev Ben-David and Lev Reyzin.
\newblock Data stability in clustering: A closer look.
\newblock {\em Theoretical Computer Science}, 558:51--61, 2014.

\bibitem[Ben15]{Ben-David15}
Shai Ben{-}David.
\newblock Computational feasibility of clustering under clusterability
  assumptions.
\newblock {\em CoRR}, abs/1501.00437, 2015.

\bibitem[BL12]{balcan2012clustering}
Maria~Florina Balcan and Yingyu Liang.
\newblock Clustering under perturbation resilience.
\newblock In {\em Automata, Languages, and Programming}, pages 63--74.
  Springer, 2012.

\bibitem[Das08]{dasgupta2008hardness}
Sanjoy Dasgupta.
\newblock {\em The hardness of k-means clustering}.
\newblock Department of Computer Science and Engineering, University of
  California, San Diego, 2008.

\bibitem[GJ02]{garey2002computers}
Michael~R Garey and David~S Johnson.
\newblock {\em Computers and intractability}, volume~29.
\newblock wh freeman New York, 2002.

\bibitem[KBDM09]{kulis2009semi}
Brian Kulis, Sugato Basu, Inderjit Dhillon, and Raymond Mooney.
\newblock Semi-supervised graph clustering: a kernel approach.
\newblock {\em Machine learning}, 74(1):1--22, 2009.

\bibitem[MNV09]{mahajan2009planar}
Meena Mahajan, Prajakta Nimbhorkar, and Kasturi Varadarajan.
\newblock The planar k-means problem is np-hard.
\newblock In {\em WALCOM: Algorithms and Computation}, pages 274--285.
  Springer, 2009.

\bibitem[Vat09]{vattani2009hardness}
Andrea Vattani.
\newblock The hardness of k-means clustering in the plane.
\newblock {\em Manuscript, accessible at http://cseweb. ucsd.
  edu/avattani/papers/kmeans\_hardness. pdf}, 617, 2009.

\end{thebibliography}

\newpage

%-----------------------------------------------------------
%-----------------------------------------------------------
%-----------------------------------------------------------
%--------+-----++++--++++--+---+--+++----+------------------
%-------+-+----+---+-+-----++--+--+--+---+-------------------
%------+++++---++++--++++--+-+-+--+---+--+-------------------
%-----+-----+--+-----+-----+--++--+--+---+-------------------
%----+-------+-+-----++++--+---+--+++----+---=--=-=----------
%------------------------------------------------------------

\appendix
\section{Relationships Between Query Models}
\label{appendix:diffQueryModels}

\begin{proposition}
Any clustering algorithm that uses only $q$ same-cluster queries can be adjusted to use $2q$ cluster-assignment queries (and no same-cluster queries) with the same order of time complexity.
\end{proposition}
\begin{proof}
We can replace each same-cluster query with two cluster-assignment queries as in $Q(x_1,x_2)={\mathbbm{1}}\{Q(x_1)=Q(x_2))\}$.
\end{proof}

\begin{proposition}
Any algorithm that uses only $q$ cluster-assignment queries can be adjusted to use $kq$ same-cluster queries (and no cluster-assignment queries) with at most a factor $k$ increase in computational complexity, where $k$ is the number of clusters.
\end{proposition}
\begin{proof}
If the clustering algorithm has access to an instance from each of $k$ clusters (say $x_i\in X_i$), then it can simply simulate the cluster-assignment query by making $k$ same-cluster queries ($Q(x) = \argmax_{i}\mathbbm{1}\{Q(x, x_i)\}$). Otherwise, assume that at the time of querying $Q(x)$ it has only instances from $k^\prime<k$ clusters. In this case, the algorithm can do the same with the $k^\prime$ instances and if it does not find the cluster, assign $x$ to a new cluster index. This will work, because in the clustering task the output of the algorithm is a partition of the elements, and therefore the indices of the clusters do not matter.
\end{proof}

%-----------------------------------------------------------
%-----------------------------------------------------------
%-----------------------------------------------------------

\section{Comparison of $\gamma$-Margin and $\alpha$-Center Proximity}
\label{appendix:gammaMrginVsAlphaCenter}

In this paper, we introduced the notion of $\gamma$-margin niceness property. We further showed upper and lower bounds on the computational complexity of clustering under this assumption. It is therefore important to compare this notion with other previously-studied clusterability notions.

%, and provided upper (with query) and lower bounds on $\gamma$, when the metric space $M$ is Euclidean and the centers are allowed to be points in the metric space. 

An important notion of niceness of data for clustering is $\alpha$-center proximity property.

\begin{definition}[$\alpha$-center proximity \cite{awasthi2012center}]
\label{defn:alphacp}
Let $(\mc X, d)$ be a clustering instance in some metric space $M$, and let $k$ be the number of clusters. We say that a center-based clustering $\mc C_{\mc X} = \{C_1, \ldots, C_k\}$ induced by centers $c_1, \ldots, c_k \in M$ satisfies the $\alpha$-center proximity property (with respect to  $\mc X$ and $k$) if the following holds 
$$\forall x \in C_i, i\neq j, \alpha d(x, c_i) < d(x, c_j)$$
\end{definition}

 This property has been considered in the past in various studies \cite{balcan2012clustering,awasthi2012center}. In this appendix we will show some connections between $\gamma$-margin and $\alpha$-center proximity properties.

  It is important to note that throughout this paper we considered clustering in Euclidean spaces. Furthermore, the centers were not restricted to be selected from the data points. 
  However, this is not necessarily the case in other studies.

\begin{table}[]
\centering
\caption{Known results for $\alpha$-center proximity}
\label{table:alphacp}
\begin{tabular}{lll}
\cline{2-3}
\multicolumn{1}{l|}{} & \multicolumn{1}{l|}{Euclidean} & \multicolumn{1}{l|}{General Metric} \\ \hline
\multicolumn{1}{|l|}{\begin{tabular}[c]{@{}l@{}}Centers \\ from data\end{tabular}} & \multicolumn{1}{l|}{\begin{tabular}[c]{@{}l@{}}Upper bound : $\sqrt{2}+1$  \cite{balcan2012clustering}\\ Lower bound : ?\end{tabular}} & \multicolumn{1}{l|}{\begin{tabular}[c]{@{}l@{}}Upper bound : $\sqrt{2}+1$  \cite{balcan2012clustering}\\ Lower bound : 2 \cite{ben2014data}\end{tabular}} \\ \hline
\multicolumn{1}{|l|}{\begin{tabular}[c]{@{}l@{}}Unrestricted \\ Centers \end{tabular}} & \multicolumn{1}{l|}{\begin{tabular}[c]{@{}l@{}}Upper bound : $2+\sqrt{3}$ \cite{awasthi2012center}\\ Lower bound : ?\end{tabular}} & \multicolumn{1}{l|}{\begin{tabular}[c]{@{}l@{}}Upper bound : $2+\sqrt{3}$ \cite{awasthi2012center}\\ Lower bound : 3 \cite{awasthi2012center}\end{tabular}} \\ \hline
 &  & 
\label{table:alphacenter}
\end{tabular}
\end{table}

An overview of the known results under $\alpha$-center proximity is provided in Table \ref{table:alphacenter}. The results are provided for the case that the centers are restricted to be selected from the training set, and also the unrestricted case (where the centers can be arbitrary points from the metric space). Note that any upper bound that works for general metric spaces also works for the Euclidean space. 

We will show that using the same techniques one can prove upper and lower bounds for $\gamma$-margin property. It is important to note that for $\gamma$-margin property, in some cases the upper and lower bounds match. Hence, there is no hope to further improve those bounds unless P=NP. A summary of our results is provided in \ref{table:gammamargin}.  

\begin{table}[]
\centering
\caption{Results for $\gamma$-margin}
\label{table:gammamargin}
\begin{tabular}{lll}
\cline{2-3}
\multicolumn{1}{l|}{}                                                                     & \multicolumn{1}{l|}{Euclidean} & \multicolumn{1}{l|}{General Metric}                                                                         \\ \hline
\multicolumn{1}{|l|}{\begin{tabular}[c]{@{}l@{}}Centers \\ from data\end{tabular}}        & \multicolumn{1}{l|}{\begin{tabular}[c]{@{}l@{}}Upper bound : 2 (Thm. \ref{thm:upperCenterData})\\ Lower bound : ? \end{tabular}}    &       \multicolumn{1}{l|}{\begin{tabular}[c]{@{}l@{}}Upper bound : 2 (Thm. \ref{thm:upperCenterData})\\ Lower bound : 2 (Thm. \ref{thm:lowerCenterData})\end{tabular}}           \\ \hline
\multicolumn{1}{|l|}{\begin{tabular}[c]{@{}l@{}}Unrestricted \\Centers \end{tabular}} & \multicolumn{1}{l|}{\begin{tabular}[c]{@{}l@{}}Upper bound : 3 (Thm. \ref{thm:upperCenterMetric})\\ Lower bound : 1.84 (Thm. \ref{thm:gammaLower})\\ \end{tabular}}         & \multicolumn{1}{l|}{\begin{tabular}[c]{@{}l@{}}Upper bound : 3 (Thm. \ref{thm:upperCenterMetric})\\ Lower bound : 3 (Thm. \ref{thm:lowerCenterMetric})\\ Awasthi\end{tabular}} \\ \hline
                                                                                          &                       &    
\label{table:gammamargin}                                                                                                                                                                                                 
\end{tabular}
\end{table}

\subsection{Centers from data}
\begin{theorem}
\label{thm:upperCenterData}
Let $(X , d)$ be a clustering instance and $\gamma \ge 2$. Then, Algorithm 1 in \cite{balcan2012clustering} outputs a tree $\mc T$ with the following property:

Any $k$-clustering $\mc C^* = \{C_1^*, \ldots, C_k^* \}$ which satisfies the $\gamma$-margin property and its cluster centers $\mu_1, \ldots, \mu_k$ are in $X$, is a pruning of the tree $T$. In other words, for every $1 \le i \le k$, there exists a node $N_i$ in the tree $T$ such that $C_i^* = N_i$.
\end{theorem}

\begin{proof}
Let $p, p' \in C_i^*$ and $q \in C_j^*$. \cite{balcan2012clustering} prove the correctness of their algorithm for $\alpha > \sqrt{2} + 1$. Their proof relies only on the following three properties which are implied when $\alpha > \sqrt{2} + 1$. We will show that these properties are implied by $\gamma > 2$ instances as well.
\begin{itemize}[nolistsep,noitemsep]
\item $d(p, \mu_i) < d(p, q)$\\
$\gamma d(p, \mu_i) < d(q, \mu_i) < d(p, q) + d(p, \mu_i) \implies d(p, \mu_i) < \frac{1}{\gamma-1}d(p, q)$.
\item $d(p, \mu_i) < d(q, \mu_i)$\\
This is trivially true since $\gamma > 2$.
\item $d(p, \mu_i) < d(p', q)$\\
Let $r = \max_{x \in C_i^*} d(x, \mu_i)$. Observe that $d(p, \mu_i) < r$. Also, $d(p', q)> d(q, \mu_i)-d(p', \mu_i) > \gamma r - r = (\gamma -1)r$.
\end{itemize}
\end{proof}

\begin{theorem}
\label{thm:lowerCenterData}
Let $(\mc X, d)$ be a clustering instance and $k$ be the number of clusters. For $\gamma < 2$, finding a $k$-clustering of $X$ which satisfies the $\gamma$-margin property and where the corresponding centers $\mu_1, \ldots, \mu_k$ belong to $\mc X$ is NP-Hard.
\end{theorem}
\begin{proof}
For $\alpha < 2$, \cite{ben2014data} proved that in general metric spaces, finding a clustering which satisfies the $\alpha$-center proximity and where the centers $\mu_1, \ldots, \mu_k \in \mc X$ is NP-Hard. Note that the reduced instance in their proof, also satisfies $\gamma$-margin for $\gamma < 2$. 
\end{proof}

\subsection{Centers from metric space}
\begin{theorem}
\label{thm:upperCenterMetric}
Let $(X , d)$ be a clustering instance and $\gamma \ge 3$. Then, the standard single-linkage algorithm outputs a tree $\mc T$ with the following property:

Any $k$-clustering $\mc C^* = \{C_1^*, \ldots, C_k^* \}$ which satisfies the $\gamma$-margin property is a pruning of $T$. In other words, for every $1 \le i \le k$, there exists a node $N_i$ in the tree $T$ such that $C_i^* = N_i$. 
\end{theorem}

\begin{proof}
\cite{balcan2008discriminative} showed that if a clustering $C^*$ has the strong stability property, then single-linkage outputs a tree with the required property. It is simple to see that if $\gamma > 3$ then instances have strong-stability and the claim follows.  
\end{proof}

\begin{theorem}
\label{thm:lowerCenterMetric}
Let $(\mc X, d)$ be a clustering instance and $\gamma < 3$. Then, finding a $k$-clustering of $X$ which satisfies the $\gamma$-margin is NP-Hard.
\end{theorem}
\begin{proof}
\cite{awasthi2012center} proved the above claim but for $\alpha < 3$ instances. Note however that the construction in their proof satisfies $\gamma$-margin for $\gamma < 3$. 
\end{proof}

%-----------------------------------------------------------
%-----------------------------------------------------------
%-----------------------------------------------------------

\section{Proofs of Lemmas \ref{lemma:kmeansEquivalenceX3C} and \ref{lemma:gammaLower}}
\label{appendix:lowerBoundProof}

In Section \ref{section:lowerBounds} we proved Theorem \ref{thm:gammaLower} based on two technical results (i.e., lemma \ref{lemma:kmeansEquivalenceX3C} and \ref{lemma:gammaLower}). In this appendix we provide the proofs for these lemmas. In order to start, we first need to establish some properties about the Euclidean embedding of $X$ proposed in Section \ref{section:lowerBounds}.

%$$L_1 = (6m+3)w, L_2 = 6m(l-1), L=L_1 + L_2, \alpha = \frac{d}{w}-\frac{1}{2w^3}$$

\begin{definition}[$A$- and $B$-Clustering of $R_i$]
\label{defn:abclusteringVattani}

An $A$-Clustering of row $R_i$ is a clustering in the form of $\{\{s_i\}, \{r_{i,1}, r_{i,2}\}, \{r_{i,3}, r_{i,4}\}, \ldots,$ $ \{r_{i,6m-1}, r_{i,6m}\},\{r_{i, 6m+1}, f_i\}\}$. A $B$-Clustering of row $R_i$ is a clustering in the form of $\{\{s_i, r_{i, 1}\}, \{r_{i,2}, r_{i,3}\}, \{r_{i,4}, r_{i,5}\}, \ldots,$ $ \{r_{i,6m}, r_{i,6m+1}\},\{f_i\}\}$. 
\end{definition}

\begin{definition}[Good point for a cluster]
\label{defn:goodPointVattani}
A cluster $C$ is good for a point $z \not\in C$ if adding $z$ to $C$ increases cost by exactly $\frac{2w}{3}h^2$ 
\end{definition}

Given the above definition, the following simple observations can be made. 
\begin{itemize}[nolistsep,noitemsep]
\item The clusters $\{r_{i,2j-1}, r_{i, 2j}\}$, $\{r_{i,2j}, r_{i, 2j+1}\}$ and $\{g_{i,j}\}$ are good for $x_{i,j}$ and $y_{i-1,j}$.
\item The clusters $\{r_{i,2j}, r_{i, 2j+1}\}$ and $\{g_{i,j}\}$ are good for $x_{i,j}'$ and $y_{i-1,j}'$.
\end{itemize}

\begin{definition}[Nice Clustering]
\label{defn:niceClustering}
A $k$-clusteirng is nice if every $g_{i,j}$ is a singleton cluster, each $R_i$ is grouped in the form of either an $A$-clustering or a $B$-clustering, and each point in $Z_i$ is added to a cluster which is good for it.
\end{definition}

It is straightforward to see that a row grouped in a $A$-clustering costs $(6m+3)w-\alpha$ while a row in $B$-clustering costs $(6m+3)w$. Hence, a nice clustering of $H_{l,m} \cup Z$ costs at most $L_1 + L_2$. More specifically, if $t$ rows are group in a $A$-clustering, the nice-clustering costs $L_1+L_2-t\alpha$. Also, observe that any nice clustering of $X$ has only the following four different types of clusters. \begin{enumerate}[label=(\arabic*),nolistsep,leftmargin=*]
\item Type E - $\{r_{i,2j-1}, r_{i,2j+1}\}$ \\
The cost of this cluster is $2w$ and the contribution of each location to the cost (i.e., $\frac{cost}{\#locations}$) is $ \frac{2w}{2} = w$.
\item Type F - $\{r_{i,2j-1}, r_{i, 2j}, x_{i, j}\}$ or $\{r_{i,2j-1}, r_{i, 2j}, y_{i-1, j}\}$ or $\{r_{i,2j}, r_{i, 2j+1}, x_{i, j}'\}$ or $\{r_{i,2j}, r_{i, 2j+1}, y_{i-1, j}'\}$\\
The cost of any cluster of this type is $2w(1+\frac{h^2}{3})$ and the contribution of each location to the cost is at most $\frac{2w}{9}(h^2+3)$. This is equal to $\frac{16}{9}w$ because we had set $h = \sqrt 5$.
\item Type I - $\{g_{i, j}, x_{i,j}\}$ or $\{g_{i, j}, x_{i,j}'\}$  or $\{g_{i, j}, y_{i,j}\}$  or $\{g_{i, j}, y_{i,j}'\}$\\
The cost of any cluster of this type is $\frac{2}{3}wh^2$ and the contribution to the cost of each location is $\frac{w}{3}h^2$. For our choice of $h$, the contribution is $\frac{5}{3}w$.
\item Type J - $\{s_i, r_{i,1}\}$ or $\{r_{i,6m+1}, f_i\}$\\
The cost of this cluster is $3w$ (or $3w-\alpha$) and the contribution of each location to the cost is at most $1.5w$. 
\end{enumerate}
Hence, observe that in a nice-clustering, any location contributes at most $\le \frac{16}{9}w$ to the total clustering cost. This observation will be useful in the proof of the lemma below.

\begin{lemma}
\label{lemma:costNonNice}
For large enough $w = poly(l, m)$, any non-nice clustering of $X = H_{l, m} \cup Z$ costs at least $L + \frac{w}{3}$.
\end{lemma}

\begin{proof}
We will show that any non-nice clustering $C$ of $X$ costs at least $\frac{w}{3}$ more than any nice clustering. This will prove our result. The following cases are possible.

\begin{itemize}[nolistsep,leftmargin=*]
\item $C$ contains a cluster $C_i$ of cardinality $t > 6$ (i.e., contains $t$ weighted points)\\
Observe that any $x \in C_i$ has at least $t-5$ locations at a distance greater than 4 to it, and $4$ locations at a distance at least $2$ to it. Hence, the cost of $C_i$ is at least $\frac{w}{2t}(4^2(t-5)+2^24)t = 8w(t-4)$. $C_i$ allows us to use at most $t-2$ singletons. This is because a nice clustering of these $t+(t-2)$ points uses at most $t-1$ clusters and the clustering $C$ uses  $1 + (t-2)$ clusters for these points. The cost of the nice cluster on these points is $\le \frac{16w}{9}2(t-1)$. While the non-nice clustering costs at least $8w(t-4)$. For $t \ge 6.4 \implies 8(t-4) > \frac{32}{9}(t-1)$ and the claim follows. Note that in this case the difference in cost is at least $\frac{8w}{3}$. 

\item Contains a cluster of cardinality $t = 6$\\
Simple arguments show that amongst all clusters of cardinality $6$, the following has the minimum cost. $C_i = \{r_{i, 2j-1}, r_{i, 2j}, x_{i,j}, y_{i-1, j}, r_{i, 2j+1}, r_{2j+2}\}$. The cost of this cluster is $\frac{176w}{6}$. Arguing as before, this allows us to use $4$ singletons. Hence, a nice cluster on these $10$ points costs at most $\frac{160w}{9}$. The difference of cost is at least $34w$.  

\item Contains a cluster of cardinality $t = 5$\\
Simple arguments show that amongst all clusters of cardinality $5$, the following has the minimum cost. $C_i = \{r_{i, 2j-1}, r_{i, 2j}, x_{i,j}, y_{i-1, j}, r_{i, 2j+1}\}$. The cost of this cluster is $16w$. Arguing as before, this allows us to use $3$ singletons. Hence, a nice cluster on these $8$ points costs at most $16w\frac{8}{9}$. The difference of cost is at least $\frac{16w}{9}$.  

\item Contains a cluster of cardinality $t = 4$\\
It is easy to see that amongst all clusters of cardinality $4$, the following has the minimum cost. $C_i = \{r_{i, 2j-1}, r_{i, 2j}, x_{i,j}, r_{i, 2j+1}\}$. The cost of this cluster is $11w$. Arguing as before, this allows us to use $2$ singletons. Hence, a nice cluster on these $6$ points costs at most $\frac{32w}{3}$. The difference of cost is at least $\frac{w}{3}$.

\item All the clusters have cardinality $\le 3$ \\
Observe that amongst all non-nice clusters of cardinality $3$, the following has the minimum cost. $C_i = \{r_{i, 2j-1}, r_{i, 2j}, r_{i, 2j+1}\}$. The cost of this cluster is $8w$. Arguing as before, this allows us to use at most $1$ more singleton. Hence, a nice cluster on these $4$ points costs at most $\frac{64w}{9}$. The difference of cost is at least $\frac{8w}{9}$.

It is also simple to see that any non-nice clustering of size $2$ causes an increase in cost of at least $w$.

\end{itemize}
\end{proof}

%\begin{lemma}
%\label{lemma:kmeansEquivalenceX3C}
%The set $X = H_{l,n} \cup Z$ has a $k$-clustering %of cost less or equal to $L$ if and only if there %is an exact cover for the X3C instance.
%\end{lemma}
\begin{proof}[Proof of lemma \ref{lemma:kmeansEquivalenceX3C}]
The proof is identical to the proof of Lemma 11 in \cite{vattani2009hardness}. Note that the parameters that we use are different with those utilized by \cite{vattani2009hardness}; however, this is not an issue, because we can invoke our lemma \ref{lemma:costNonNice} instead of the analogous result in Vattani (i.e., lemma 10 in Vattani's paper). The sketch of the proof is that based on lemma \ref{lemma:costNonNice}, only nice clusterings of $X$ cost $\le L$. On the other hand, a nice clustering corresponds to an exact 3-set cover. Therefore, if there exists a clustering of $X$ of cost $\le L$, then there is an exact 3-set cover. The other way is simpler to proof; assume that there exists an exact 3-set cover. Then, the corresponding construction of $X$ makes sure that it will be clustered \emph{nicely}, and therefore will cost $\le L$.

 %argue why the only nice clusterings of $X$ cost $\le L$. We use Lemma \ref{lemma:costNonNice} to argue the same. Note that since the parameters are much different that those considered in Vattani's original construction, we need to argue much more carefully to prove Lemma \ref{lemma:costNonNice}. 
\end{proof}

%\begin{lemma}
%\label{lemma:gammaLower}
%Any clustering of $X = H_{l,n} \cup Z$ which has %cost$\le L$ has the $\gamma$-margin property where %$\gamma = \sqrt{\frac{17}{5}} \approx 1.84$.
%\end{lemma}
\begin{proof}[Proof of lemma \ref{lemma:gammaLower}]
As argued before, any nice clustering has four different types of clusters. We will calculate the minimum ratio $a_i = \frac{d(y, \mu)}{d(x, \mu)}$ for each of these clusters $C_i$ (where $x \in C_i$, $y \not\in C_i$ and $\mu$ is mean of all the points in $C_i$.) Then, the minimum $a_i$ will give the desired $\gamma$. 
\begin{enumerate}[label=(\arabic*),nolistsep,leftmargin=*]
\item For Type E clusters $a_i = h/1 = \sqrt{5}$. 
\item For Type F clusters. $a_i = \frac{\frac{\sqrt{4+16(h^2-1)}}{3}}{2h/3} = \sqrt{\frac{17}{5}} \approx 1.84$. 
\item For Type I clusters, standard calculation show that $a_i > 2$.
\item For Type J clusters $a_i = \frac{2+\frac{\sqrt{6}}{2}}{\frac{\sqrt{6}}{2}} > 2$.
\end{enumerate}

\noindent Furthurmore, $|\mc X| = (12lm + 3l -6m)w$ and $k = 6lm + 2l - 3m$. Hence for $w = $poly$(l, m)$ our hardness result holds for $k = |\mc X|^{\epsilon}$ for any $0 < \epsilon < 1$.
\end{proof}
\noindent Lemmas \ref{lemma:kmeansEquivalenceX3C} and \ref{lemma:gammaLower} complete the proof of the main result (Thm. \ref{thm:gammaLower}).

%================================================
%================================================
%================================================

\section{Concentration inequalities}
\label{appendixsection:conIneq}

\begin{theorem}[Generalized Hoeffding's Inequality (e.g., \cite{ashtiani2015dimension})]
\label{thm:genHoeff}
Let $X_1, \ldots. X_n$ be i.i.d random vectors in some Hilbert space such that for all $i$, $\|X_i\|_2 \le R$ and $E[X_i] = \mu$. If $n > c\frac{\log(1/\delta)}{\epsilon^2}$, then with probability atleast $1-\delta$, we have that
$$\Big\|\mu - \frac{1}{n}\sum X_i\Big\|_2^2 \le R^2\epsilon$$ 
\end{theorem}

\end{document}